\documentclass{article}
\usepackage{spconf,amsmath,graphicx}
\usepackage{amsthm}
\usepackage{amssymb}
\usepackage{cite}
\usepackage{graphicx,epstopdf}
\usepackage{color}
\usepackage{times}
\usepackage{verbatim}
\usepackage{floatflt}
\usepackage{enumerate}
\usepackage{array}
\usepackage{multicol,afterpage,wrapfig}
\usepackage{tikz}
\usepackage{algorithm}
\usepackage[noend]{algpseudocode}

\usepackage[noend]{algcompatible}
\usepackage{algorithmicx}

\makeatletter
\def\BState{\State\hskip-\ALG@thistlm}
\makeatother
\usepackage{amsthm,amssymb,eucal}
\usepackage[utf8]{inputenc}
\usepackage{epsfig}
\usepackage{exscale}
\usepackage{latexsym}
\usepackage{verbatim}
\usepackage{amsfonts}
\usepackage{subfigure}
\usepackage{multirow}
\usepackage{booktabs}
\usepackage{adjustbox}

\usepackage{cite}
\usepackage{balance}
\usepackage{color}
\usepackage{transparent}
\usepackage{import}
\usepackage{mathtools}
\usepackage{tikz}
\usetikzlibrary{automata,arrows,positioning,calc}
\usepackage{bbm}
\usepackage[printonlyused,withpage]{acronym}
\usepackage{tabu}
\usepackage{longtable}
\usepackage{mathtools}
\usepackage{stfloats}
\usepackage{psfrag}
\usepackage{float}
\usepackage{acronym}  
\usepackage{psfrag}
\usepackage{adjustbox}

\graphicspath{{../Figures/}}
\DeclareGraphicsExtensions{.eps}

\usepackage{amssymb}
\usepackage{amsthm}

\DeclareMathOperator*{\argmin}{arg\,min}

\title{Personalizing Federated Learning with Over-the-Air Computations}

\name{Zihan Chen $^{\dagger *}$\thanks{$^*$Equal contribution. This work was supported in part by the National Natural Science Foundation of China under Grant 62271513. (\textit{Corresponding Author: Howard H. Yang})}  \qquad Zeshen Li$^{\ddagger *}$ \qquad Howard H. Yang$^{\ddagger}$ \qquad Tony Q.S. Quek$^{\dagger }$}
\address{$^{\dagger}$ Singapore University of Technology and Design, Singapore 487372  \\
         $^{\ddagger}$ ZJU-UIUC Institute, Zhejiang University, Haining 314400, China}

\begin{document}

\newtheorem{theorem}{Theorem}[section]
\newtheorem{proposition}[theorem]{Proposition}
\newtheorem{lemma}[theorem]{Lemma}
\newtheorem{corollary}[theorem]{Corollary}
\newtheorem{problem}[theorem]{Problem}
\newtheorem{conjecture}[theorem]{Conjecture}

\newtheorem{definition}[theorem]{Definition}
\newtheorem{example}[theorem]{Example}
\newtheorem{remark}[theorem]{Remark}
\newtheorem{assumption}[theorem]{Assumption}

\algnewcommand\algorithmicreturn{\textbf{return} }
\algnewcommand\RETURN{\State \algorithmicreturn}%

\algrenewcommand\algorithmicrequire{\textbf{function}} 
\algnewcommand\algorithmicreq{\textbf{Require:} }
\algnewcommand\REQ{\STATEx \algorithmicreq{}}%

\maketitle

\begin{abstract}

Federated edge learning is a promising technology to deploy intelligence at the edge of  wireless networks in a privacy-preserving manner. Under such a setting, multiple clients collaboratively train a global generic model under the coordination of an edge server. But the training efficiency is often throttled by challenges arising from limited communication and data heterogeneity. This paper presents a distributed training paradigm that employs analog over-the-air computation to address the communication bottleneck. Additionally, we leverage a bi-level optimization framework to personalize the federated learning model so as to cope with the data heterogeneity issue. As a result, it enhances the generalization and robustness of each client's local model. We elaborate on the model training procedure and its advantages over conventional frameworks. We provide a convergence analysis that theoretically demonstrates the training efficiency. We also conduct extensive experiments to validate the efficacy of the proposed framework. 

\end{abstract}
\begin{keywords}
Federated learning, personalization, wireless edge network, over-the-air computation, robustness. 
\end{keywords}
\section{Introduction}
\label{sec:intro}
With the increasing concerns on data privacy as well as the rapid growing capability of edge devices, deploying the federated learning (FL)~\cite{MaMMooRam:17AISTATS} at the edge of wireless network, commonly coined as \textit{federated edge learning} (FEEL), is attracting arising attentions~\cite{yang_tcom_fl_scheduling,saad2019survey}, where the computation tasks could be decoupled from the cloud to the edge of the network in a privacy-preserving paradigm.

However, in real-world implementations of the FEEL system, a typical training process of a generic global model requires hundreds of communication rounds among the massively distributed clients. The iterative gradient exchange would bring hefty communication overhead~\cite{MaMMooRam:17AISTATS, li2020flsurvey}. 
Hence, for a digital communication based-FEEL system run over the resource-constrained network, the limited communication bandwidth would inevitably constrain the scalability, since every selected client in each round  requires an assigned orthogonal sub-channel to perform the update~\cite{YanCheQue:21JSTSP,ding2020ota}.

To combat the communication bottleneck, an array of recent studies \cite{ding2020ota,guo2021ota,chen2018ota_iot,AmiGun:20TSP,sery2020analog,chen2022analog} suggest incorporate \textit{analog over-the-air} (A-OTA) computations into the design of FEEL systems, exploiting the superposition property of the multi-access channels for fast and scalable model aggregations.
The adoption of the A-OTA computations with FEEL, termed as \textit{A-OTA-FEEL}, have been demonstrated to have high spectral efficiency, low access latency, enhanced privacy protection, and reduced communication costs~\cite{ding2020ota,liu2020privacyfree,YanCheQue:21JSTSP}, all benefiting from the automatic ``one-shot'' gradient aggregation for model update~\cite{zhu2021otanetwork,guo2021ota}.
Nevertheless, A-OTA computations inevitably introduce the random channel fading and interference into the aggregated gradients, leading to performance degradations such as the slower convergence and instability~\cite{YanCheQue:21JSTSP,sery2020analog}.
Hence, robust training techniques could be adopted to enhance the performance with channel imperfections.

In addition to the inherent channel fading and interference, current approaches for A-OTA-FEEL system design have not addressed the existing discrepancies in both local data statistics and qualities (i.e., data heterogeneity and label noise) due to the diverse preferences, bias, and hardware capabilities of different clients~\cite{xu2022fedcorr,chen2022analog}.
Such discrepancies in clients' datasets can significantly degrade the FL performance. 
More crucially, these discrepancies would even make the single generic global model fail to achieve good generalization and robustness performance on diverse local data~\cite{tan2022pflsurvey,li2021ditto,smith2017mtl}. 
On the other hand, the future intelligent network is envisioned to be able to provide customized services to the clients~\cite{zhou2020service6g,TangLPT20}. It is necessary to address the individuality of the clients in the design of the A-OTA-FEEL system with personalized intelligent services. 

In view of the above challenges, we propose a personalized training framework in the context of the A-OTA-FEEL. The proposed framework provides personalized model training services while still enjoying the benefits of analog over-the-air computations, in which each client would maintain two models (i.e., generic and personalized models) at the local via two different global and local objectives.
We also provide a convergence analysis of the proposed personalized A-OTA-FEEL framework.
Both the theoretical and numerical results validate the gain from the personalization design.

\section{System model}
\label{sec:propose}
We consider a wireless system consisting of one edge server that is attached to an access point and $K$ clients, where the $i$-th device has a local dataset $\mathcal{D}_i$. 
In this system, communications between the clients and the server are taken place over the spectrum.
Each client's goal is to ($a$) learn a statistical model based on its own dataset and ($b$) exploit information from the dataset of other clients and, aided by the orchestration of the server, attain an improvement toward its locally learned model while preserving privacy. 
Such tasks can be achieved via a bi-level optimization based PFL framework. 
More precisely, every client $k$ aims to find a local model $\boldsymbol{v}_k \in \mathbb{R}^d$ that solves the following \textit{personal objective} funciton 
\vspace*{-0.6em}
\begin{align} \label{eq:local_obj}
&\underset{\boldsymbol{v}_k}{\min}\qquad f_k\left( \boldsymbol{v}_k;\boldsymbol{w}^* \right) = F_k\left( \boldsymbol{v}_k \right) +\frac{\lambda}{2}\left\| \boldsymbol{v}_k-\boldsymbol{w}^* \right\| ^2\\ \label{equ:global_obj}
& \text{s.t.} \qquad \boldsymbol{w}^*\in \underset{\boldsymbol{w}}{\argmin}\frac{1}{K}\sum_{i=1}^K{F_i\left( \boldsymbol{w} \right)}
\end{align}
in which $F_i(\cdot): \mathbb{R}^d \rightarrow \mathbb{R}$ denotes the loss function of client $i$, $\boldsymbol{w} \in  \mathbb{R}^d$ is a globally trained generic model, and $\lambda$ is a hyper-parameter that controls the level of personalization of the clients' locally trained personal models.
We use $\eta_l$ to denote the learning rate in the optimization of personal objective. Notably, a large value of $\lambda$ indicates that the clients' local models $\{\boldsymbol{v}_i\}_{i=1}^{K}$ need to well align with the global model $\boldsymbol{w}^*$, promoting commonality across the local models. In contrast, a small $\lambda$ improves personalization. 
Moreover, benefiting from such a bi-level optimization design, the personalized local models $\{\boldsymbol{v}_i\}_{i=1}^{K}$ would have better generalization and robustness performance on the limited local data. 

To solve the above optimization problem, the clients need to not just train their local models through \eqref{eq:local_obj}, but more importantly, jointly minimize a global objective function as per \eqref{equ:global_obj}. 
Due to privacy concerns, the clients will carry out the minimization problem \eqref{equ:global_obj} without sharing data in an FL manner.
The following section presents a model training approach that capitalizes on the properties of analog transmissions for low-latency and high-privacy federated computing.

\begin{figure}[t!]
  \centering{\includegraphics[width=1.00\columnwidth]{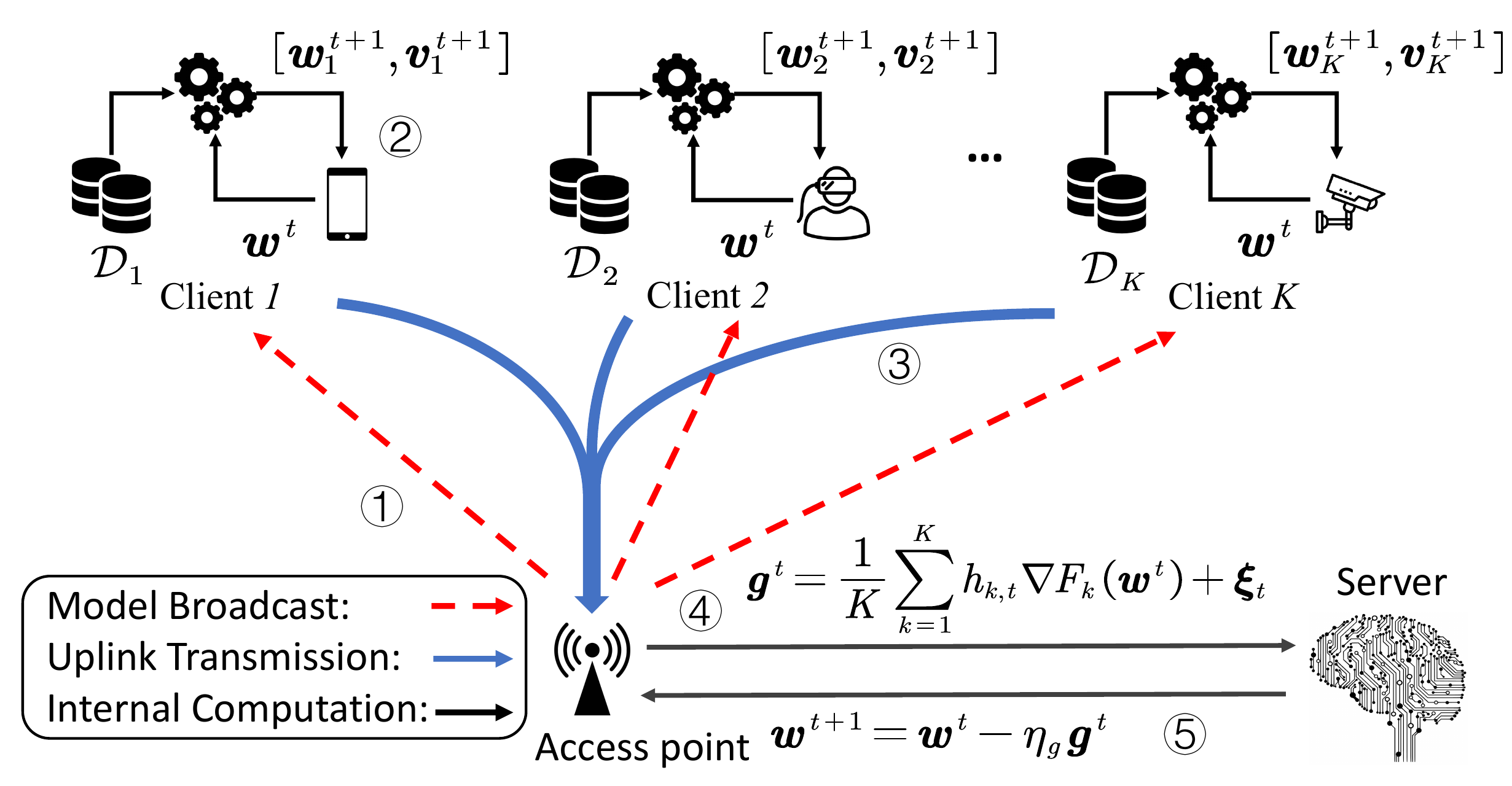}}
  \caption{An overview of personalized analog over-the-air federated edge learning, in which each client maintains a common global model and local personalized model.}  
  \label{fig:system_model}
\end{figure}

\section{ Model Training Procedure}
This section details the PFL model training process based on over-the-air computing schemes. (See Fig. 1 for an overview.) More precisely, we employ A-OTA computations for fast (and highly scalable) gradient aggregation that significantly improves the training efficiency of the global model. The detailed training procedure is elaborated on below. 

\textit{1) Local Model Training:} Without loss of generality, we assume the system has progressed to the $t$-th round of global training, where the clients just received the global model parameters $\boldsymbol{w}^t$ from the edge server. {\footnote{Because of the high transmit power of the access point, we assume the global model can be successfully received by all the clients. }} 
Then, each client $k$ updates its personalized local model $\boldsymbol{v}^t_k$ by optimizing the local personal objective function $f_k( \boldsymbol{v}_k; \boldsymbol{w}^t)$. (For simplicity, we use $\boldsymbol{v}_k$ to denote personal model.) Each client $k$ also computes its local gradient $\nabla F_k( \boldsymbol{w}^t )$ for global model update. 

\textit{2) Analog Gradient Aggregation:} We consider the clients adopt analog transmissions to upload their locally trained parameters. Specifically, once $\nabla F_k( \boldsymbol{w}^t )$ is computed, client $k$ modulates it entry-by-entry onto the magnitudes of a common set of orthogonal baseband waveforms~\cite{YanCheQue:21JSTSP}, forming the following analog signal
\vspace*{-0.6em}
\begin{align}
    x_k(s) = \langle \boldsymbol{u}(s), \nabla F_k( \boldsymbol{w}^t ) \rangle
\end{align}
where $\langle \cdot, \cdot \rangle$ denotes the inner product between two vectors and $\boldsymbol{u}(s) = ({u}_1(s), ..., {u}_d(s)), s \in [0, \tau]$ has its entries satisfying

\begin{align}
    \int_0^\tau u_i^2(s) ds = 1, ~~i = 1, 2, ..., d \\
    \int_0^\tau u_i(s) u_j(s) ds = 0, ~ ~ i \neq j.
\end{align}
\noindent$\tau$ represents the total time of signal duration. Once the analog waveforms $\{ x_k(s) \}_{k=1}^K$ are available, the clients transmit them concurrently to the access point. 
Owing to the superposition property of electromagnetic waves, the signal received at the radio front end of the access point can be expressed as: 
\vspace*{-0.6em}
\begin{align}
    y(s) = \sum_{k=1}^K h_{k,t} P_k x_k(s) + \xi(s),
\end{align}

where $h_{k, t}$ is the channel fading experienced by client $k$, $P_k$ the corresponding transmit power, and $\xi(s)$ denotes the additive noise. 
In this work, we assume the channel fading is i.i.d. across clients, with mean $\mu_h$ and variance $\sigma_h^2$. 
Besides, the transmit power of each client is set to compensate for the large-scale path loss and we use $P$ to denote the average power for all clients.
This received signal will be passed through a bank of match filters, with each branch tuning to $u_i(s), i = 1, 2, .., d$. On the output side, the server obtains:
\vspace*{-0.6em}
\begin{equation}\label{eq:ota_agg}
\boldsymbol{g}^{t}= \frac{1}{K} \sum_{k=1}^{K} h_{k, t} \nabla F_{k}\left(\boldsymbol{w}^{t}\right)+\boldsymbol{\xi}_{t},
\end{equation}
in which $\boldsymbol{\xi}_{t}$ is a $d$-dimensional random vector with each entry being i.i.d. and follows a zero-mean Gaussian distribution with variance $\sigma^2$. 
It is noteworthy that the vector given in \eqref{eq:ota_agg} is a distorted version of the globally aggregated gradient. 

\textit{3) Global Generic Model Update:} Using $\boldsymbol{g}^{t}$, the server updates the global model as follows:
\vspace*{-0.3em}
\begin{equation} \label{eq:global_w_update}
    \boldsymbol{w}^{t+1} \leftarrow \boldsymbol{w}^{t}-\eta_g\boldsymbol{g}^t,
\end{equation}
where $\eta_g$ is the learning rate for generic global model udpate. 
After this, the server  broadcasts the $\boldsymbol{w}^{t+1}$ to all clients for the next round local computing. Such a process will be iterated through multiple rounds until the global model converges.

Notably, the bi-level optimization in the personal model mitigates impacts from the random channel fading and noise introduced by analog over-the-air computations to the globally aggregated gradient, thus improving the robustness of the analog over-the-air federated edge learning system.
Consequently, personalization enhances both the generalization and robustness of the FL system in the presence of data heterogeneity and noisy model aggregation. \footnote{This paper does not consider the architecture-based PFL methods in which each client maintains a personal model with unique architecture via techniques such as sparsification or model weight decoupling~\cite{tan2022pflsurvey}. It would increase the cost of the synchronizations for signal transmission to achieve automatic signal aggregations in the context of A-OTA computations.}

\begin{algorithm}[t!]
    \caption{Personalized A-OTA FEEL framework}
    \label{alg:summary}
    \renewcommand{\algorithmicrequire}{\textbf{Input:}}
    \renewcommand{\algorithmicensure}{\textbf{Output:}}
    \algrenewcommand\algorithmicreq{\textbf{function}}
    \begin{algorithmic}[1]
        \REQUIRE Initial global model $\boldsymbol{w}^0$, initial personal local models $\{\boldsymbol{v}_i\}_{k=1}^{K}$, $T$, $\lambda$,$\eta_g$ 
        \ENSURE Global model $\boldsymbol{w}^T$, personal model $\{\boldsymbol{v}_i\}_{i=1}^{K}$
        \FOR{$t=0,1,2$ \textbf{to} $T-1$}    
            \FOR{$k=1,2,$ \textbf{to} $K$ \textbf{in parallel}} 
                \STATEx \textit{\qquad \# global generic model update}
                \STATE $\nabla F_i(\boldsymbol{w}^{t})$ $\leftarrow$ \textsc{ClientUpdate}($k, \boldsymbol{w}^t$) 
                \STATEx \textit{\qquad \# local personalized model update}
                \STATE Update $\boldsymbol{v}_k$ via solving $f_k\left( \boldsymbol{v}_k;\boldsymbol{w}^t \right)$ 
                \STATE Transmit local gradient $\nabla F_i(\boldsymbol{w}^{t})$ to edge server
            \ENDFOR
            \STATEx \textit{\quad \# Noisy aggregation via analog OTA computations}
            \STATE  $\boldsymbol{g}^{t}= \frac{1}{K} \sum_{k=1}^{K} h_{k, t} \nabla F_{k} \left(\boldsymbol{w}^{t}\right)+\boldsymbol{\xi}_{t}$            
            \STATE $\boldsymbol{w}^{t+1} \leftarrow \boldsymbol{w}^{t}-\eta_g \boldsymbol{g}^t$ \textit{\quad \# Global model update}
            
        \ENDFOR
        \RETURN $\boldsymbol{w}^{T}$,  $\{\boldsymbol{v}_i\}_{k=1}^{K}$
    \end{algorithmic}
\end{algorithm}

We summarize the proposed framework in Algorithm \ref{alg:summary}. 
It is worthwhile to highlight several advantages of the presented framework, including high scalability, low access latency, enhanced privacy, better generalization as well as robustness, brought together by analog over-the-air computations and personalized training.
We would also like to address that we make no assumption about the generic model training, as well as the OTA communication, which indicates that the performance could be further enhanced by advanced federated optimization~\cite{li2020flsurvey} and OTA techniques~\cite{chen2018ota_iot,liu2020privacyfree}.

\section{Convergence analysis}
\label{sec:conv_ana}
This section provides the convergence analysis of our proposed framework from the perspective of both the global model and the local personalized model. 

To facilitate the analysis, we assume that each client's loss function is $\mu$-strongly convex and the local gradient $\nabla F_k(\boldsymbol{w}^t)$ is Lipschitz continuous with constant $L_k>0$. We use $\bar{L}$ to denote the maximal constant among all clients and $L$ is the Lipschitz gradient constant of global objective.
$\delta$ is the diameter of the compact convex parameter set that all model parameters lie in.
We consider that if the global model converges, its convergence rate is denoted by $g\left( t \right)$, i.e., there exists $g\left( t \right)$ that $\underset{t\rightarrow \infty}{\lim}g\left( t \right) =0$ and $E\left[ ||\boldsymbol{w}^t-\boldsymbol{w}^*||^2 \right] \le g\left( t \right)$.
In this work, we denote by $\boldsymbol{v}_k^*$ and $\boldsymbol{z}_k^*$ as $\boldsymbol{v}_{k}^{*}={\mathrm{arg}\min}_{\boldsymbol{v}} f_k\left( \boldsymbol{v};\boldsymbol{w}^* \right)$
and $\boldsymbol{z}_k^* = {\arg\min}_{ \boldsymbol{z} } F_k\left( \boldsymbol{z} \right)$, respectively.
We assume the $l_2$ distance between the optimal local and global model is bounded, i.e., for any $k\in \left[ K \right]$, $\left\| \boldsymbol{z}_{k}^{*}-\boldsymbol{w}^* \right\| \le M$.

We now  present the main theoretical finding of this paper.
First of all, the following theorem provides the convergence rate of the global generic model.
\begin{theorem} \label{thm:ConvRate}
\textit{Under the considered A-OTA FEEL system, let $r_{0}^{2}\triangleq \left\| \boldsymbol{w}^0-\boldsymbol{w}^* \right\| ^2$ be the squared distance between the initial estimate $\boldsymbol{w}_0$ and
$\boldsymbol{w}^*$. If the learning rate $\eta _g$ satisfies
\begin{align} \label{equ:StpSiz_Cdn}
0<\eta _g<\min \left\{ \frac{2}{\mu _h(\mu +L)},\frac{2\mu _h\mu LK}{\sigma _{h}^{2}\bar{L}^2(1+2\delta )(\mu +L)} \right\}, 
\end{align}
then the error of $\boldsymbol{w}^t$ can be bounded as:
\begin{align}
\resizebox{\hsize}{!}{$E\left[ ||\boldsymbol{w}^t-\boldsymbol{w}^*||^2 \right] \le c^tr_{0}^{2}+\frac{\eta _{g}^{2}}{(1-c)}\left( \frac{\sigma _{h}^{2}\delta \bar{L}^2(2+\delta )}{K}+\frac{d\sigma^{2}}{P^2K^2} \right)$ } 
\end{align}
where $0<c\triangleq 1-\frac{2\eta _g\mu _h\mu L}{\mu +L}+\frac{\eta _{g}^{2}\sigma _{h}^{2}\bar{L}^2(1+2\delta )}{K}<1$.
 }
\end{theorem}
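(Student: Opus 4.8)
The plan is to derive a one-step contraction recursion for the error and then unroll it. Writing $\Delta_t \triangleq \boldsymbol{w}^t - \boldsymbol{w}^*$ and substituting the global update \eqref{eq:global_w_update} with the aggregated gradient \eqref{eq:ota_agg}, I first expand
\[
\|\Delta_{t+1}\|^2 = \|\Delta_t\|^2 - 2\eta_g\langle\boldsymbol{g}^t,\Delta_t\rangle + \eta_g^2\|\boldsymbol{g}^t\|^2,
\]
and take the conditional expectation over the fading $\{h_{k,t}\}$ and the noise $\boldsymbol{\xi}_t$ given $\boldsymbol{w}^t$. Since the fading is i.i.d.\ with mean $\mu_h$ and the noise is zero-mean and independent, $E[\boldsymbol{g}^t\mid\boldsymbol{w}^t] = \mu_h\cdot\frac{1}{K}\sum_k\nabla F_k(\boldsymbol{w}^t)$, so the cross term equals $\mu_h$ times the inner product of the true global gradient $\nabla\bar F(\boldsymbol{w}^t)$, where $\bar F\triangleq\frac1K\sum_k F_k$, with $\Delta_t$.

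For the inner product I would invoke the standard co-coercivity inequality for the $\mu$-strongly convex, $L$-smooth objective $\bar F$, namely $\langle\nabla\bar F(\boldsymbol{w}^t),\Delta_t\rangle \ge \frac{\mu L}{\mu+L}\|\Delta_t\|^2 + \frac{1}{\mu+L}\|\nabla\bar F(\boldsymbol{w}^t)\|^2$ (using $\nabla\bar F(\boldsymbol{w}^*)=0$); this produces the $\frac{2\eta_g\mu_h\mu L}{\mu+L}\|\Delta_t\|^2$ term appearing in $c$. For the second moment I would split $h_{k,t}=\mu_h+(h_{k,t}-\mu_h)$, so that $E[\|\boldsymbol{g}^t\|^2\mid\boldsymbol{w}^t]$ decomposes into $\mu_h^2\|\nabla\bar F(\boldsymbol{w}^t)\|^2$, a fading-variance term $\frac{\sigma_h^2}{K^2}\sum_k\|\nabla F_k(\boldsymbol{w}^t)\|^2$, and the noise contribution $\frac{d\sigma^2}{P^2K^2}$, the cross terms vanishing by independence and zero mean. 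The two gradient-norm contributions combine into $\big(\eta_g^2\mu_h^2-\frac{2\eta_g\mu_h}{\mu+L}\big)\|\nabla\bar F(\boldsymbol{w}^t)\|^2$, which is nonpositive exactly under the first step-size bound $\eta_g\le\frac{2}{\mu_h(\mu+L)}$ and can therefore be discarded.

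The crux is to bound $\frac1K\sum_k\|\nabla F_k(\boldsymbol{w}^t)\|^2$ so that the coefficients $(1+2\delta)$ and $\delta(2+\delta)$ emerge. I would use the Lipschitz continuity of each $\nabla F_k$ around its own minimizer $\boldsymbol{z}_k^*$ (where $\nabla F_k(\boldsymbol{z}_k^*)=0$) to get $\|\nabla F_k(\boldsymbol{w}^t)\|\le\bar L\|\boldsymbol{w}^t-\boldsymbol{z}_k^*\|$, then the triangle inequality through $\boldsymbol{w}^*$ together with $\|\boldsymbol{w}^*-\boldsymbol{z}_k^*\|\le\delta$ to obtain $\|\boldsymbol{w}^t-\boldsymbol{z}_k^*\|^2\le\|\Delta_t\|^2+2\delta\|\Delta_t\|+\delta^2$. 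The linear middle term is the obstacle; the key device is the elementary inequality $\|\Delta_t\|\le\|\Delta_t\|^2+1$ (valid since $a^2-a+1>0$), which turns $2\delta\|\Delta_t\|$ into $2\delta\|\Delta_t\|^2+2\delta$ and yields $\frac1K\sum_k\|\boldsymbol{w}^t-\boldsymbol{z}_k^*\|^2\le(1+2\delta)\|\Delta_t\|^2+\delta(2+\delta)$. The quadratic part feeds into $c$ and the remainder into the additive constant.

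Collecting these bounds gives $E[\|\Delta_{t+1}\|^2\mid\boldsymbol{w}^t]\le c\|\Delta_t\|^2+\eta_g^2\big(\frac{\sigma_h^2\delta\bar L^2(2+\delta)}{K}+\frac{d\sigma^2}{P^2K^2}\big)$ with $c$ as stated. Taking total expectation, verifying $0<c<1$ (the bound $c<1$ is equivalent to the second step-size condition in \eqref{equ:StpSiz_Cdn}, while $c>0$ follows from the first condition via $(\mu+L)^2\ge4\mu L$), and unrolling the recursion while summing the geometric series $\sum_{j=0}^{t-1}c^j\le\frac{1}{1-c}$ delivers the claimed bound. I expect the sign control of the $\|\nabla\bar F\|^2$ terms and the $\delta$-bookkeeping in the gradient second-moment estimate to be the only delicate steps; the remainder is routine algebra.
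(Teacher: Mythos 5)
Your proposal is correct and follows essentially the same route as the proof the paper defers to (the contraction analysis of \cite{sery2020analog}): a one-step error expansion, co-coercivity of the $\mu$-strongly convex, $L$-smooth global objective for the cross term, a bias--variance split of the faded aggregate for the second moment, and the bound $\|\Delta_t\|\le\|\Delta_t\|^2+1$ to absorb the linear term, which reproduces the exact constants $(1+2\delta)$ and $\delta(2+\delta)$ as well as both step-size conditions. Nothing is missing.
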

\begin{proof}
Please refer to \cite{sery2020analog} for a detailed proof.
\end{proof}

Next, we employ the following lemma to characterize the convergence rate of the local personalized models.
\begin{lemma}
\textit{Under the considered system, let local learning rate satisfy condition \eqref{equ:StpSiz_Cdn}, then the local model of client $k$ converges as:
\begin{align}
& E\left[ ||\boldsymbol{v}_{k}^{t+1}-\boldsymbol{v}_{k}^{*}||^2 \right]
\le \left( 1-\mu \eta _l \right) E\left[ ||\boldsymbol{v}_{k}^{t}-\boldsymbol{v}_{k}^{*}||^2 \right] +\eta _{l}^{2}\lambda ^2M^2\nonumber\\
&+\eta _{l}^{2}\lambda ^2E\left[ ||\boldsymbol{w}^t-\boldsymbol{w}^*||^2 \right]
+2\eta _{l}^{2}\lambda ^2M\sqrt{E\left[ ||\boldsymbol{w}^t-\boldsymbol{w}^*||^2 \right]}\nonumber\\
&+2\eta _l\lambda \sqrt{E\left[ ||\boldsymbol{v}_{k}^{t}-\boldsymbol{v}_{k}^{*}||^2 \right] E\left[ ||\boldsymbol{w}^t-\boldsymbol{w}^*||^2 \right]}.
\end{align}
}
\end{lemma}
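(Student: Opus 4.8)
The plan is to derive a one-step recursion for $\|\boldsymbol{v}_k^{t+1}-\boldsymbol{v}_k^*\|^2$ by viewing the personal update as a regularized gradient step and carefully separating the effect of anchoring the target $\boldsymbol{v}_k^*$ at the limit $\boldsymbol{w}^*$ while the round-$t$ update is driven by $\nabla f_k(\boldsymbol{v}_k^t;\boldsymbol{w}^t)=\nabla F_k(\boldsymbol{v}_k^t)+\lambda(\boldsymbol{v}_k^t-\boldsymbol{w}^t)$. First I would write $\boldsymbol{v}_k^{t+1}-\boldsymbol{v}_k^*=(\boldsymbol{v}_k^t-\boldsymbol{v}_k^*)-\eta_l\nabla f_k(\boldsymbol{v}_k^t;\boldsymbol{w}^t)$ and expand the squared norm into the three standard pieces: the previous error $\|\boldsymbol{v}_k^t-\boldsymbol{v}_k^*\|^2$, a descent inner-product term $-2\eta_l\langle\nabla f_k(\boldsymbol{v}_k^t;\boldsymbol{w}^t),\boldsymbol{v}_k^t-\boldsymbol{v}_k^*\rangle$, and the squared-gradient term $\eta_l^2\|\nabla f_k(\boldsymbol{v}_k^t;\boldsymbol{w}^t)\|^2$.

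For the inner-product term I would add and subtract the gradient evaluated at the anchor $\boldsymbol{w}^*$, so that $\nabla f_k(\boldsymbol{v}_k^t;\boldsymbol{w}^t)=\nabla f_k(\boldsymbol{v}_k^t;\boldsymbol{w}^*)+\lambda(\boldsymbol{w}^*-\boldsymbol{w}^t)$. The $\mu$-strong convexity of $F_k$ (inherited by $f_k(\cdot;\boldsymbol{w}^*)$), together with $\nabla f_k(\boldsymbol{v}_k^*;\boldsymbol{w}^*)=0$, supplies the contraction $-2\eta_l\mu\|\boldsymbol{v}_k^t-\boldsymbol{v}_k^*\|^2$, while the residual $\lambda(\boldsymbol{w}^*-\boldsymbol{w}^t)$ produces an inner product with $\boldsymbol{w}^t-\boldsymbol{w}^*$; a Cauchy–Schwarz bound turns the latter into the coupling term $2\eta_l\lambda\|\boldsymbol{v}_k^t-\boldsymbol{v}_k^*\|\,\|\boldsymbol{w}^t-\boldsymbol{w}^*\|$.

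For the squared-gradient term the point is to control $\|\nabla f_k(\boldsymbol{v}_k^t;\boldsymbol{w}^t)\|$ by a quantity tied to $M$. Using the optimality relation $\nabla F_k(\boldsymbol{v}_k^*)=\lambda(\boldsymbol{w}^*-\boldsymbol{v}_k^*)$ and a convexity argument — from $\langle\lambda(\boldsymbol{w}^*-\boldsymbol{v}_k^*),\boldsymbol{v}_k^*-\boldsymbol{z}_k^*\rangle\ge\mu\|\boldsymbol{v}_k^*-\boldsymbol{z}_k^*\|^2\ge0$ one gets $\|\boldsymbol{v}_k^*-\boldsymbol{w}^*\|\le\|\boldsymbol{z}_k^*-\boldsymbol{w}^*\|\le M$ — the residual gradient at the personal optimum is bounded by $\lambda M$, and after peeling off the $\boldsymbol{w}^t-\boldsymbol{w}^*$ contribution by the triangle inequality the gradient magnitude is controlled by $\lambda\bigl(M+\|\boldsymbol{w}^t-\boldsymbol{w}^*\|\bigr)$. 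Squaring then delivers the $\eta_l^2\lambda^2M^2$, $\eta_l^2\lambda^2\|\boldsymbol{w}^t-\boldsymbol{w}^*\|^2$, and $2\eta_l^2\lambda^2M\|\boldsymbol{w}^t-\boldsymbol{w}^*\|$ contributions. I would then invoke the step-size condition \eqref{equ:StpSiz_Cdn} to ensure $\eta_l$ is small enough that the $\eta_l^2$-order smoothness remainder is absorbed into the descent term, collapsing $1-2\eta_l\mu+O(\eta_l^2)$ into the clean factor $1-\mu\eta_l$.

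Finally I would take expectations over the channel and noise randomness entering through $\boldsymbol{w}^t$, and apply Cauchy–Schwarz at the level of expectations, $E[\|\boldsymbol{v}_k^t-\boldsymbol{v}_k^*\|\,\|\boldsymbol{w}^t-\boldsymbol{w}^*\|]\le\sqrt{E[\|\boldsymbol{v}_k^t-\boldsymbol{v}_k^*\|^2]\,E[\|\boldsymbol{w}^t-\boldsymbol{w}^*\|^2]}$, together with Jensen's inequality $E[\|\boldsymbol{w}^t-\boldsymbol{w}^*\|]\le\sqrt{E[\|\boldsymbol{w}^t-\boldsymbol{w}^*\|^2]}$, to recast every coupling term in terms of the two second moments and reach the stated bound. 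The main obstacle I anticipate is the joint control of the two coupling mechanisms — the non-vanishing drift $\nabla f_k(\boldsymbol{v}_k^*;\boldsymbol{w}^t)$ caused by anchoring the personal optimum at $\boldsymbol{w}^*$ rather than the current $\boldsymbol{w}^t$, and the cross term between the local and global errors — keeping the $\|\boldsymbol{v}_k^t-\boldsymbol{v}_k^*\|$-dependent pieces from contaminating the contraction factor while tying the residual $F_k$-gradient at $\boldsymbol{v}_k^*$ to $M$, and then propagating all of this cleanly through the expectation step.
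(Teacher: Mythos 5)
The paper does not actually prove this lemma: its ``proof'' is a one-line citation to the Ditto paper \cite{li2021ditto}, so your reconstruction is the only argument on the table, and in outline it is the right one --- the same one-step expansion, the same use of $\mu$-strong convexity to get the contraction, the same anchor-swap $\nabla f_k(\boldsymbol{v}_k^t;\boldsymbol{w}^t)=\nabla f_k(\boldsymbol{v}_k^t;\boldsymbol{w}^*)+\lambda(\boldsymbol{w}^*-\boldsymbol{w}^t)$ producing the $2\eta_l\lambda\|\boldsymbol{v}_k^t-\boldsymbol{v}_k^*\|\,\|\boldsymbol{w}^t-\boldsymbol{w}^*\|$ coupling, and the same Cauchy--Schwarz/Jensen passage to expectations at the end. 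Your argument that $\|\boldsymbol{v}_k^*-\boldsymbol{w}^*\|\le\|\boldsymbol{z}_k^*-\boldsymbol{w}^*\|\le M$ via the optimality relation $\nabla F_k(\boldsymbol{v}_k^*)=\lambda(\boldsymbol{w}^*-\boldsymbol{v}_k^*)$ and monotonicity of $\nabla F_k$ is correct and is exactly how $M$ enters the bound.

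The one genuine gap is in the squared-gradient term. The full gradient at $\boldsymbol{v}_k^t$ is \emph{not} controlled by $\lambda(M+\|\boldsymbol{w}^t-\boldsymbol{w}^*\|)$ alone; it also carries a piece bounded by $(\bar L+\lambda)\|\boldsymbol{v}_k^t-\boldsymbol{v}_k^*\|$, so squaring produces not only the $\eta_l^2\|\boldsymbol{v}_k^t-\boldsymbol{v}_k^*\|^2$ remainder you propose to absorb into the contraction, but also a cross term of order $\eta_l^2(\bar L+\lambda)\lambda\|\boldsymbol{v}_k^t-\boldsymbol{v}_k^*\|(M+\|\boldsymbol{w}^t-\boldsymbol{w}^*\|)$ that appears nowhere in the stated bound. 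Splitting it by Young's inequality either inflates the $\eta_l^2\lambda^2 M^2$-type constants or loads more onto the contraction factor, so your route as written recovers the lemma only up to constants. The clean fix is to keep the gradient \emph{difference} and the drift separate from the start: write $\boldsymbol{v}_k^{t+1}-\boldsymbol{v}_k^*=\bigl[(\boldsymbol{v}_k^t-\boldsymbol{v}_k^*)-\eta_l(\nabla f_k(\boldsymbol{v}_k^t;\boldsymbol{w}^*)-\nabla f_k(\boldsymbol{v}_k^*;\boldsymbol{w}^*))\bigr]+\eta_l\lambda(\boldsymbol{w}^t-\boldsymbol{w}^*)$, using that $\nabla f_k(\boldsymbol{v}_k^*;\boldsymbol{w}^t)=\lambda(\boldsymbol{w}^*-\boldsymbol{w}^t)$ exactly. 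The bracketed term contracts to $(1-\mu\eta_l)\|\boldsymbol{v}_k^t-\boldsymbol{v}_k^*\|^2$ under the step-size condition, the drift has norm exactly $\eta_l\lambda\|\boldsymbol{w}^t-\boldsymbol{w}^*\|$, and the cross term is $2\eta_l\lambda\|\boldsymbol{v}_k^t-\boldsymbol{v}_k^*\|\,\|\boldsymbol{w}^t-\boldsymbol{w}^*\|$; this actually yields a bound \emph{without} the $M$ terms, which is strictly stronger than (and hence implies) the stated lemma. A final remark independent of your write-up: the lemma imposes on $\eta_l$ the condition \eqref{equ:StpSiz_Cdn}, which is a constraint on the \emph{global} rate $\eta_g$ in terms of channel statistics and does not by itself guarantee $1-2\mu\eta_l+\eta_l^2(\bar L+\lambda)^2\le 1-\mu\eta_l$; you should state the actual requirement $\eta_l\le\mu/(\bar L+\lambda)^2$ (or similar) explicitly rather than inheriting it from \eqref{equ:StpSiz_Cdn}.
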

\begin{proof}
Please refer to \cite{li2021ditto} for detailed proof.
\end{proof}

Aided by the above result, we obtain the convergence rate of the global model as the following.
\begin{theorem} 
\textit{Under the considered A-OTA FEEL system, if there exists a variable $A$ satisfying $\frac{g(t+1)}{g(t)}\ge 1-\frac{g(t)}{A}$, then, there is a constant $C<\infty $ such that for any client $k$, $E\left[ \left\| \boldsymbol{v}_{k}^{t}-\boldsymbol{v}_{k}^{*} \right\| ^2 \right] \le Cg(t)$ with a local learning rate given by $\eta_l=\frac{2g\left( t \right)}{A\mu}$.
 }
\end{theorem}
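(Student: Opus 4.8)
The plan is to establish the bound $a_t := E[\|\boldsymbol{v}_k^t-\boldsymbol{v}_k^*\|^2]\le Cg(t)$ by induction on $t$, feeding the global convergence guarantee $E[\|\boldsymbol{w}^t-\boldsymbol{w}^*\|^2]\le g(t)$ (the defining property of $g$, consistent with Theorem~\ref{thm:ConvRate}) into the one-round recursion supplied by the preceding Lemma. First I would substitute the prescribed schedule $\eta_l=2g(t)/(A\mu)$, so that $\mu\eta_l=2g(t)/A$, and abbreviate $b_t:=E[\|\boldsymbol{w}^t-\boldsymbol{w}^*\|^2]\le g(t)$. Under the induction hypothesis $a_t\le Cg(t)$, each of the five terms on the right-hand side of the Lemma is controlled by a power of $g(t)$: the contraction term contributes $(1-2g(t)/A)Cg(t)$; the constant and $b_t$ terms contribute multiples of $g(t)^2$ and $g(t)^3$; the $\sqrt{b_t}$ term a multiple of $g(t)^{5/2}$; and the cross term $2\eta_l\lambda\sqrt{a_t b_t}$ a multiple of $\sqrt{C}\,g(t)^2$.

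Collecting these, the induction step reduces to an estimate of the form $a_{t+1}\le Cg(t)-(2C/A)g(t)^2+R(t)$, where $R(t)$ gathers all the positive error contributions and is $O(g(t)^2)$. The next key step is to invoke the hypothesis $g(t+1)\ge g(t)-g(t)^2/A$, which gives $Cg(t+1)\ge Cg(t)-(C/A)g(t)^2$; hence it suffices to verify $a_{t+1}\le Cg(t)-(C/A)g(t)^2$, i.e. that the retained friction $(C/A)g(t)^2$ dominates $R(t)$. Dividing through by $g(t)^2$, writing $x=\sqrt{C}$, and passing to the limit $g(t)\to0$ (where the $g(t)^{1/2}$ and $g(t)$ corrections inside $R(t)/g(t)^2$ vanish), this collapses to the scalar condition
\begin{equation}
x^2-\frac{4\lambda}{\mu}\,x-\frac{4\lambda^2 M^2}{A\mu^2}\ge 0,
\end{equation}
a quadratic in $x$ with positive leading coefficient, hence satisfied for every sufficiently large $x$; any such $x$ yields an admissible finite $C=x^2$.

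Finally I would tie up two loose ends. Because the reduced inequality only holds once the lower-order slack is negligible (and once $g(t)\le A/2$, which secures $1-\mu\eta_l\ge0$), the induction step is valid only for $t\ge t_0$ with $t_0$ finite, since $g(t)\to0$. The finitely many initial rounds are then absorbed by enlarging $C$ to also exceed $\max_{t<t_0} a_t/g(t)$, which is finite because each $a_t$ is bounded on the compact convex parameter set of diameter $\delta$ and $g(t)>0$. Taking $C$ to be the maximum of this quantity and a root of the quadratic closes the argument. I expect the principal obstacle to be precisely the self-referential cross term: since $2\eta_l\lambda\sqrt{a_t b_t}$ carries a factor $\sqrt{C}$, the constant $C$ reappears on the right-hand side of the induction, so the closure requirement is quadratic rather than linear in $C$. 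Verifying that this quadratic still admits a finite solution, and that the vanishing $g(t)$-dependent terms do not obstruct it for large $t$, is the crux of the proof.
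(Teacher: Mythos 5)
The paper omits this proof entirely (citing space limits; the intended argument is the one in the cited Ditto reference \cite{li2021ditto}), and your induction is exactly that argument: substitute $\eta_l=2g(t)/(A\mu)$ into the recursion of the preceding Lemma, use the hypothesis $g(t+1)\ge g(t)-g(t)^2/A$ to reduce the inductive step to a condition quadratic in $\sqrt{C}$, and absorb the finitely many initial rounds by enlarging $C$ via the compactness of the parameter set. Your treatment is sound, and it correctly identifies the two points any complete write-up must handle --- the self-referential $\sqrt{C}$ contribution from the cross term $2\eta_l\lambda\sqrt{a_t b_t}$, and the requirement $g(t)\le A/2$ so that $1-\mu\eta_l\ge 0$ before the contraction factor can be applied to the induction hypothesis.
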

\begin{proof}
    We omit the proof due to the space limit. 
\end{proof}
To this end, we can see that via A-OTA computing, both the global and local personalized  models attain linear convergence rates, while addressing the non-ideal gradient updates.

\section{Numerical results}
\label{sec:result}
This section evaluates the performance of our proposed framework.
Particularly, we examine the performance of the personalized local training in terms of generalization power and robustness compared to conventional settings and baselines. 
We also explore the robustness performance of the framework in the context of the noisy local data (i.e., part of the local training data are annotated with wrong labels).

\subsection{Experiment setup}
We evaluate our framework on image classification tasks on CIFAR-10/100~\cite{Krizhevsky_2009} with ResNet-18 and ResNet-34~\cite{he2016resnet}, respectively. Both IID and non-IID data settings are considered, in which the non-IID data partitions are implemented with Dirichlet distribution and the identicalness of the distributions could be controlled by the parameter $\alpha$. Unless otherwise specified, we use $K=100$ for CIFAR-10, $K=50$ for CIFAR-100, and Rayleigh fading with average channel gain $\mu_h=1$. We select $\lambda$ from comparison experiments. The federated label noise setting is the same as the \cite{xu2022fedcorr}\footnote{For all label noise settings, we use lower bound 0.5 for local label noise level. Details can be found in \cite{xu2022fedcorr}.}.

\subsection{Performance evaluation}
\begin{figure}[t!]
\vspace*{-0.6em}

  \centering{\includegraphics[width=0.95\columnwidth]{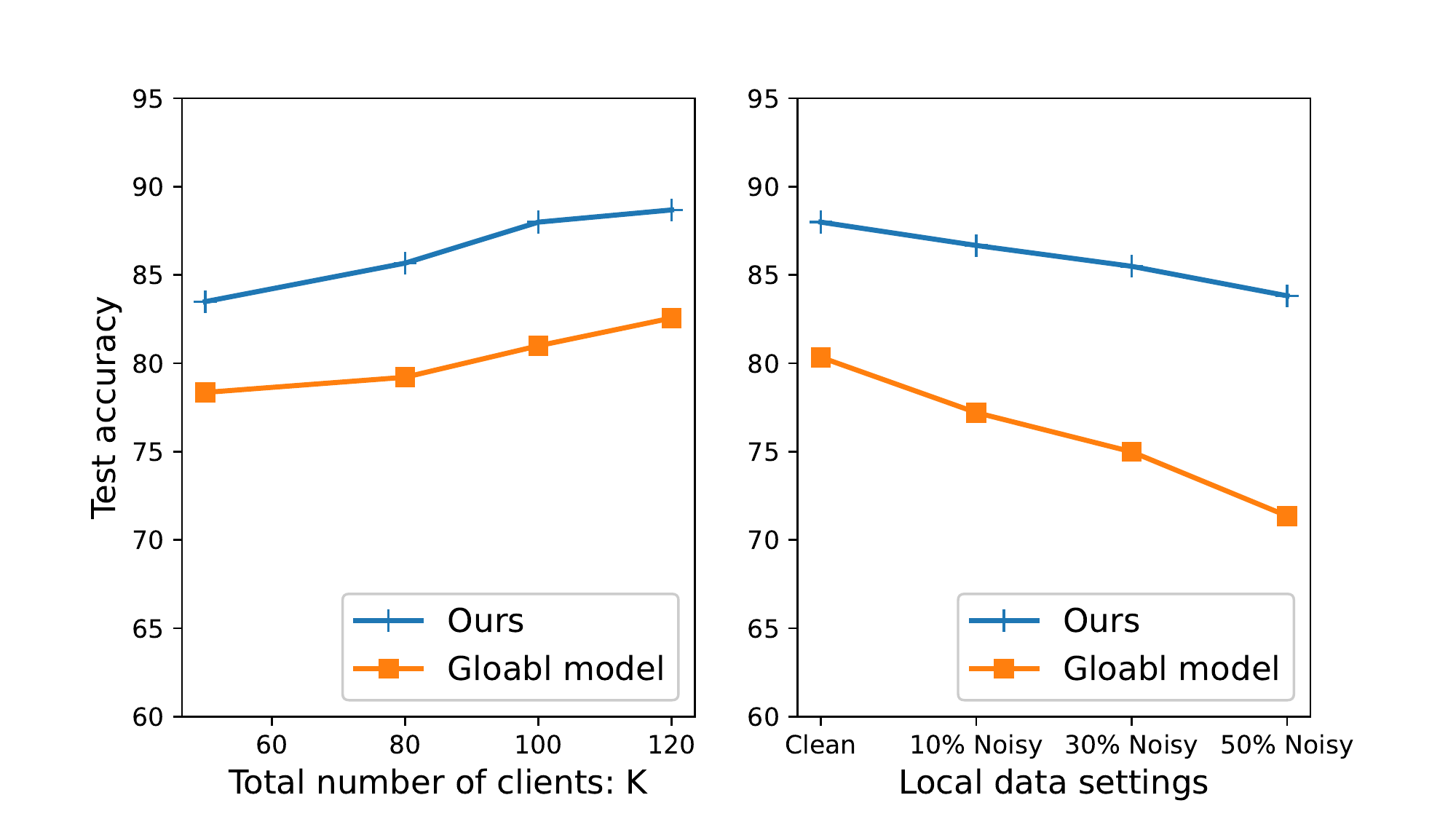}}
  \caption{Performance comparison of the best test accuracy on CIFAR-10 with IID data settings. (Left): Performance with different total number of clients $K$. (Right): Performance with different ratios of clients containing noisy local data.}  
  \label{fig:acc}
\end{figure}

We first compare the personalized models performance of our proposed framework with generic global model from conventional FL setup in Fig.~\ref{fig:acc} with IID local data partition, using the same configurations of A-OTA-FEEL system. The two sub-figures demonstrate the consistent outperformance of personalization training scheme. 
Specifically, increasing total number of clients in the system (i.e., a larger $K$) would improve the system performance for both two settings in A-OTA, and personalized training presents a more robust generalization with diverse local data quality. 

To further demonstrate the outperformance of the proposed framework, we provide the detailed best test accuracy comparison in Tab. 1 on CIFAR-10/100 with non-IID data, compared with FedAvg~\cite{MaMMooRam:17AISTATS}, FedProx~\cite{li2020flsurvey} and FedRep~\cite{collins2021exploiting} with same OTA setup.
In such context, our proposed personalzied training method achieves best test accuracies across all settings, which shows the superiority with respect to the generalization and robustness.

\begin{table}[t!]
\label{tab:results}
\caption{Average (3 trails) of the best test accuracy comparison on CIFAR-10/100 with real-world data settings. The highest accuracy for each setting is boldfaced.}
\centering
\begin{adjustbox}{width=0.95\columnwidth,center}
\begin{tabular}{l|l|ccc}
\hline
\midrule
                
\multirow{2}{*}{} & \multirow{2}{*}{Methods} & \multicolumn{2}{c}{CIFAR-10} & CIFAR-100  \\ \cline{3-5} 
                      &              &$\alpha=10$&$\alpha=1$ & $\alpha=1$  \\ \hline
\multirow{4}{*}{Clean}& OTA-FedAvg   &  76.32    & 72.71     &65.12  \\
                      & OTA-FedProx  &  76.45    & 72.90     &66.35  \\
                      & OTA-FedRep   &  82.44   &  79.93    & - \\
                      & Ours         &  \textbf{83.57}    & \textbf{81.05}     &\textbf{69.33}  \\ \hline
\multirow{4}{*}{Noisy}&  OTA-FedAvg  &  70.51    & 67.15     &58.81  \\
                      &  OTA-FedProx &  72.06    & 69.62     &59.29  \\
                      & OTA-FedRep   & 77.09    &  74.23    & - \\
                      &  Ours        &  \textbf{78.74}    & \textbf{75.31}     &\textbf{63.30} \\ \hline
\end{tabular}
\end{adjustbox}
\end{table}

\section{Conclusion}
\label{sec:conclu}
In this paper, we proposed a personalized A-OTA-FEEL framework that utilizes bi-level optimization and analog transmissions to address the data heterogeneity and communication efficiency challenges. 
Both the theoretical and empirical results were provided to demonstrate the effectiveness of the proposed framework. 
We highlighted the robustness performance of the PFL in edge learning.
To the best of our knowledge, this is the first work that explores the PFL model in A-OTA FEEL systems. We envision that PFL could be a potential technique to provide customized services in future intelligent networks. 
\bibliographystyle{IEEEbib}
\bibliography{strings}

\end{document}